\title{Learning Long-Term Reward Redistribution \\ via Randomized Return Decomposition}
\author{Zhizhou Ren$^1$, Ruihan Guo$^2$, Yuan Zhou$^3$, Jian Peng$^{145}$ \\
	$^1$University of Illinois at Urbana-Champaign~ $^2$Shanghai Jiao Tong University\\
	$^3$BIMSA~ $^4$AIR, Tsinghua University~ $^5$HeliXon Limited \\
	\texttt{zhizhour@illinois.edu}, \texttt{guoruihan@sjtu.edu.cn} \\
	\texttt{timzhouyuan@gmail.com}, \texttt{jianpeng@illinois.edu}
}
\begin{document}

\maketitle

\begin{abstract}
	Many practical applications of reinforcement learning require agents to learn from sparse and delayed rewards. It challenges the ability of agents to attribute their actions to future outcomes. In this paper, we consider the problem formulation of episodic reinforcement learning with trajectory feedback. It refers to an extreme delay of reward signals, in which the agent can only obtain one reward signal at the end of each trajectory. A popular paradigm for this problem setting is learning with a designed auxiliary dense reward function, namely proxy reward, instead of sparse environmental signals. Based on this framework, this paper proposes a novel reward redistribution algorithm, randomized return decomposition (RRD), to learn a proxy reward function for episodic reinforcement learning. We establish a surrogate problem by Monte-Carlo sampling that scales up least-squares-based reward redistribution to long-horizon problems. We analyze our surrogate loss function by connection with existing methods in the literature, which illustrates the algorithmic properties of our approach. In experiments, we extensively evaluate our proposed method on a variety of benchmark tasks with episodic rewards and demonstrate substantial improvement over baseline algorithms.
\end{abstract}

\section{Introduction}

Scaling reinforcement learning (RL) algorithms to practical applications has become the focus of numerous recent studies, including resource management \citep{mao2016resource}, industrial control \citep{hein2017benchmark}, drug discovery \citep{popova2018deep}, and recommendation systems \citep{chen2018stabilizing}. One of the challenges in these real-world problems is the sparse and delayed environmental rewards. For example, in the molecular structure design problem, the target molecule property can only be evaluated after completing the whole sequence of modification operations \citep{zhou2019optimization}. The sparsity of environmental feedback would complicate the attribution of rewards on agent actions and therefore can hinder the efficiency of learning \citep{rahmandad2009effects}. In practice, it is a common choice to formulate the RL objective with a meticulously designed reward function instead of the sparse environmental rewards. The design of such a reward function is crucial to the performance of the learned policies. Most standard RL algorithms, such as temporal difference learning and policy gradient methods, prefer dense reward functions that can provide instant feedback for every step of environment transitions. Designing such dense reward functions is not a simple problem even with domain knowledge and human supervision. It has been widely observed in prior works that handcrafted heuristic reward functions may lead to unexpected and undesired behaviors \citep{randlov1998learning, bottou2013counterfactual, andrychowicz2017hindsight}. The agent may find a shortcut solution that formally optimizes the given objective but deviates from the desired policies \citep{dewey2014reinforcement, amodei2016concrete}. The reward designer can hardly anticipate all potential side effects of the designed reward function, which highlights the difficulty of reward engineering.

To avoid the unintended behaviors induced by misspecified reward engineering, a common paradigm is considering the reward design as an online problem within the trial-and-error loop of reinforcement learning \citep{sorg2010reward}. This algorithmic framework contains two components, namely reward modeling and policy optimization. The agent first learns a proxy reward function from the experience data and then optimizes its policy based on the learned per-step rewards. By iterating this procedure and interacting with the environment, the agent is able to continuously refine its reward model so that the learned proxy reward function can better approximate the actual objective given by the environmental feedback. More specifically, this paradigm aims to reshape the sparse and delayed environmental rewards to a dense Markovian reward function while trying to avoid misspecifying the goal of given tasks.

In this paper, we propose a novel reward redistribution algorithm based on a classical mechanism called \textit{return decomposition} \citep{arjona2019rudder}. Our method is built upon the least-squares-based return decomposition \citep{efroni2021reinforcement} whose basic idea is training a regression model that decomposes the trajectory return to the summation of per-step proxy rewards. This paradigm is a promising approach to redistributing sparse environmental feedback. Our proposed algorithm, \textit{randomized return decomposition} (RRD), establish a surrogate optimization of return decomposition to improve the scalability in long-horizon tasks. In this surrogate problem, the reward model is trained to predict the episodic return from a random subsequence of the agent trajectory, i.e., we conduct a structural constraint that the learned proxy rewards can approximately reconstruct environmental trajectory return from a small subset of state-action pairs. This design enables us to conduct return decomposition effectively by mini-batch training. Our analysis shows that our surrogate loss function is an upper bound of the original loss of deterministic return decomposition, which gives a theoretical interpretation of this randomized implementation. We also present how the surrogate gap can be controlled and draw connections to another method called uniform reward redistribution. In experiments, we demonstrate substantial improvement of our proposed approach over baseline algorithms on a suite of MuJoCo benchmark tasks with episodic rewards.

%In this paper, we propose a novel reward redistribution algorithm built upon a classical mechanism called \textit{return decomposition} \citep{arjona2019rudder}. The basic idea of return decomposition is training a regression model that decomposes the trajectory return to the summation of per-step proxy rewards. This paradigm is a promising approach to the long-term credit assignment for redistributing sparse environmental feedback. Our proposed approach, \textit{randomized return decomposition} (RRD), establish a surrogate optimization of return decomposition to improve the scalability in long-horizon tasks. In this surrogate problem, the reward model is trained to predict the episodic return from a random subsequence of the agent trajectory, i.e., we conduct a structural constraint that the learned proxy rewards can approximately reconstruct environmental trajectory return from a small subset of state-action pairs. This design enables us to conduct return decomposition effectively by mini-batch training. Our analysis shows that our surrogate loss function is an upper bound of the original loss of deterministic return decomposition, which gives a theoretical interpretation of this randomized implementation. We also present how the surrogate gap can be controlled and draw connections to another method called uniform reward redistribution. In experiments, we demonstrate substantial improvement of our proposed approach over baseline algorithms on a suite of MuJoCo benchmark tasks with episodic rewards.
\section{Background}

\subsection{Episodic Reinforcement Learning with Trajectory Feedback}

In standard reinforcement learning settings, the environment model is usually formulated by a \textit{Markov decision process} \citep[MDP;][]{richard1957dynamic}, defined as a tuple $\mathcal{M}=\langle \mathcal{S}, \mathcal{A}, P, R, \mu \rangle$, where $\mathcal{S}$ and $\mathcal{A}$ denote the spaces of environment states and agent actions. $P(s'|s,a)$ and $R(s,a)$ denote the unknown environment transition and reward functions. $\mu$ denotes the initial state distribution. The goal of reinforcement learning is to find a policy $\pi:\mathcal{S}\rightarrow\mathcal{A}$ maximizing cumulative rewards. More specifically, a common objective is maximizing infinite-horizon discounted rewards based on a pre-defined discount factor $\gamma$ as follows:
\begin{align} \label{eq:standard_objective}
	\text{(standard objective)}\qquad J(\pi)=\mathbb{E}\left[\left.\sum_{t=0}^\infty\gamma^tR(s_t,\pi(s_t))~\right|~ s_0\sim\mu,~s_{t+1}\sim P(\cdot\mid s_t,\pi(s_t))\right].
\end{align}

In this paper, we consider the episodic reinforcement learning setting with trajectory feedback, in which the agent can only obtain one reward feedback at the end of each trajectory. Let $\tau$ denote an agent trajectory  that contains all experienced states and behaved actions within an episode. We assume all trajectories terminate in finite steps. The episodic reward function $R_{\text{ep}}(\tau)$ is defined on the trajectory space, which represents the overall performance of trajectory $\tau$. The goal of episodic reinforcement learning is to maximize the expected trajectory return:
\begin{align} \label{eq:episodic_objective}
	\text{(episodic objective)}\quad~ J_{\text{ep}}(\pi)=\mathbb{E}\biggl[R_{\text{ep}}(\tau) ~\biggl|~ s_0\sim \mu,~a_t=\pi(s_t),~\tau=\langle s_0, a_0, s_1, \cdots,s_T\rangle \biggr.\biggr].
\end{align}
In general, the episodic-reward setting is a particular form of \textit{partially observable Markov decision processes} (POMDPs) where the reward function is non-Markovian. The worst case may require the agent to enumerate the entire exponential-size trajectory space for recovering the episodic reward function. In practical problems, the episodic environmental feedback usually has structured representations. A common structural assumption is the existence of an underlying Markovian reward function $\widehat R(s,a)$ that approximates the episodic reward $R_{\text{ep}}(\tau)$ by a sum-form decomposition,
\begin{align} \label{eq:sum-decomposable_episodic_reward}
\text{(sum-decomposable episodic reward)} \qquad\quad R_{\text{ep}}(\tau) \approx \widehat R_{\text{ep}}(\tau) =  \sum_{t=0}^{T-1}\widehat R(s_t,a_t). \qquad\qquad\quad~
\end{align}
This structure is commonly considered by both theoretical \citep{efroni2021reinforcement} and empirical studies \citep{liu2019sequence, raposo2021synthetic} on long-horizon episodic rewards. It models the situations where the agent objective is measured by some metric with additivity properties, e.g., the distance of robot running, the time cost of navigation, or the number of products produced in a time interval.

\subsection{Reward Redistribution}

The goal of \textit{reward redistribution} is constructing a proxy reward function $\widehat R(s_t,a_t)$ that transforms the episodic-reward problem stated in Eq.~\eqref{eq:episodic_objective} to a standard dense-reward setting. By replacing environmental rewards with such a Markovian proxy reward function $\widehat R(s_t,a_t)$, the agent can be trained to optimize the discounted objective in Eq.~\eqref{eq:standard_objective} using any standard RL algorithms. Formally, the proxy rewards $\widehat R(s_t,a_t)$ form a sum-decomposable reward function $\widehat R_{\text{ep}}(\tau) =  \sum_{t=0}^{T-1}\widehat R(s_t,a_t)$ that is expected to have high correlation to the environmental reward $R_{\text{ep}}(\tau)$. Here, we introduce two branches of existing reward redistribution methods, \textit{return decomposition} and \textit{uniform reward redistribution}, which are the most related to our proposed approach. We defer the discussions of other related work to section \ref{sec:related_work}.

\paragraph{Return Decomposition.}
The idea of \textit{return decomposition} is training a reward model that predicts the trajectory return with a given state-action sequence \citep{arjona2019rudder}. In this paper, without further specification, we focus on the least-squares-based implementation of return decomposition \citep{efroni2021reinforcement}. The reward redistribution is given by the learned reward model, i.e., decomposing the environmental episodic reward $R_{\text{ep}}(\tau)$ to a Markovian proxy reward function $\widehat R(s,a)$. In practice, the reward modeling is formulated by optimizing the following loss function:
\begin{align} \label{eq:return_decomposition_loss}
	\mathcal{L}_{\text{RD}}(\theta) = \mathop{\mathbb{E}}_{\tau\sim\mathcal{D}} \left[\biggl(R_{\text{ep}}(\tau)-\sum_{t=0}^{T-1}\widehat R_{\theta}(s_t, a_t)\biggr)^2\right],
\end{align}
where $\widehat R_\theta$ denotes the parameterized proxy reward function, $\theta$ denotes the parameters of the learned reward model, and $\mathcal{D}$ denotes the experience dataset collected by the agent. Assuming the sum-decomposable structure stated in Eq.~\eqref{eq:sum-decomposable_episodic_reward}, $\widehat R_{\theta}(s,a)$ is expected to asymptotically concentrate near the ground-truth underlying rewards $\widehat R(s,a)$ when Eq.~\eqref{eq:return_decomposition_loss} is properly optimized \citep{efroni2021reinforcement}.

One limitation of the least-squares-based return decomposition method specified by Eq.~\eqref{eq:return_decomposition_loss} is its scalability in terms of the computation costs. Note that the trajectory-wise episodic reward is the only environmental supervision for reward modeling. Computing the loss function $\mathcal{L}_{\text{RD}}(\theta)$ with a single episodic reward label requires to enumerate all state-action pairs along the whole trajectory. This computation procedure can be expensive in numerous situations, e.g., when the task horizon $T$ is quite long, or the state space $\mathcal{S}$ is high-dimensional. To address this practical barrier, recent works focus on designing reward redistribution mechanisms that can be easily integrated in complex tasks. We will discuss the implementation subtlety of existing methods in section~\ref{sec:experiments}.

\paragraph{Uniform Reward Redistribution.}
To pursue a simple but effective reward redistribution mechanism, IRCR \citep{gangwani2020learning} considers \textit{uniform reward redistribution} which assumes all state-action pairs equally contribute to the return value. It is designed to redistribute rewards in the absence of any prior structure or information. More specifically, the proxy reward $\widehat R_{\text{IRCR}}(s,a)$ is computed by averaging episodic return values over all experienced trajectories containing $(s,a)$,
\begin{align} \label{eq:uniform_credit_assignment}
	\widehat R_{\text{IRCR}}(s,a) = \mathop{\mathbb{E}}_{\tau\sim\mathcal{D}}\bigl[R_{\text{ep}}(\tau) \mid (s,a)\in\tau\bigr].
\end{align}
In this paper, we will introduce a novel reward redistribution mechanism that bridges between return decomposition and uniform reward redistribution.

\section{Reward Redistribution via Randomized Return Decomposition}

In this section, we introduce our approach, randomized return decomposition (RRD), which sets up a surrogate optimization problem of the least-squares-based return decomposition. The proposed surrogate objective allows us to conduct return decomposition on short subsequences of agent trajectories, which is scalable in long-horizon tasks. We provide analyses to characterize the algorithmic property of our surrogate objective function and discuss connections to existing methods.

\subsection{Randomized Return Decomposition with Monte-Carlo Return Estimation}

One practical barrier to apply least-squares-based return decomposition methods in long-horizon tasks is the computation costs of the regression loss in Eq.~\eqref{eq:return_decomposition_loss}, i.e., it requires to enumerate all state-action pairs within the agent trajectory. To resolve this issue, we consider a randomized method that uses a Monte-Carlo estimator to compute the predicted episodic return $\widehat R_{\text{ep},\theta}(\tau)$ as follows:
\begin{align} \label{eq:Monte-Carlo_estimator}
\underbrace{\widehat R_{\text{ep},\theta}(\tau) = \sum_{t=0}^{T-1}\widehat R_\theta(s_t,a_t)}_{\text{Deterministic Computation}} = \mathop{\mathbb{E}}_{\mathcal{I}\sim\rho_T(\cdot)}\left[\frac{T}{|\mathcal{I}|}\sum_{t\in\mathcal{I}}\widehat R_{\theta}(s_t, a_t)\right] \approx \underbrace{\frac{T}{|\mathcal{I}|}\sum_{t\in\mathcal{I}}\widehat R_{\theta}(s_t, a_t)}_{\text{Monte-Carlo Estimation}}~,
\end{align}
where $\mathcal{I}$ denotes a subset of indices. $\rho_T(\cdot)$ denotes an unbiased sampling distribution where each index $t$ has the same probability to be included in $\mathcal{I}$. In this paper, without further specification, $\rho_T(\cdot)$ is constructed by uniformly sampling $K$ distinct indices.
\begin{align} \label{eq:sampling_distribution}
	\rho_T(\cdot) = \text{Uniform}\left(\bigl\{\mathcal{I}\subseteq \mathbb{Z}_T : |\mathcal{I}|=K\bigr\}\right),
\end{align}
where $K$ is a hyper-parameter. In this sampling distribution, each timestep $t$ has the same probability to be covered by the sampled subsequence $\mathcal{I}\sim\rho_T(\cdot)$ so that it gives an unbiased Monte-Carlo estimation of the episodic summation $\widehat R_{\text{ep},\theta}(\tau)$.

\paragraph{Randomized Return Decomposition.}
Based on the idea of using Monte-Carlo estimation shown in Eq.~\eqref{eq:Monte-Carlo_estimator}, we introduce our approach, \textit{randomized return decomposition} (RRD), to improve the scalability of least-squares-based reward redistribution methods. The objective function of our approach is formulated by the randomized return decomposition loss $\mathcal{L}_{\text{Rand-RD}}(\theta)$ stated in Eq.~\eqref{eq:Rand-RD_loss}, in which the parameterized proxy reward function $\widehat R_\theta$ is trained to predict the episodic return $R_{\text{ep}}(\tau)$ given a random subsequence of the agent trajectory. In other words, we integrate the Monte-Carlo estimator (see Eq.~\eqref{eq:Monte-Carlo_estimator}) into the return decomposition loss to obtain the following surrogate loss function:
\begin{align} \label{eq:Rand-RD_loss}
	\mathcal{L}_{\text{Rand-RD}}(\theta) = \mathop{\mathbb{E}}_{\tau\sim\mathcal{D}} \left[\mathop{\mathbb{E}}_{\mathcal{I}\sim\rho_T(\cdot)}\left[\biggl(R_{\text{ep}}(\tau)-\frac{T}{|\mathcal{I}|}\sum_{t\in\mathcal{I}}\widehat R_{\theta}(s_t, a_t)\biggr)^2\right]\right].
\end{align}
In practice, the loss function $\mathcal{L}_{\text{Rand-RD}}(\theta)$ can be estimated by sampling a mini-batch of trajectory subsequences instead of computing $\widehat R_\theta(s_t,a_t)$ for the whole agent trajectory, and thus the implementation of randomized return decomposition is adaptive and flexible in long-horizon tasks.

\subsection{Analysis of Randomized Return Decomposition} \label{sec:analysis}

The main purpose of our approach is establishing a surrogate loss function to improve the scalability of least-squares-based return decomposition in practice. Our proposed method, randomized return decomposition, is a trade-off between the computation complexity and the estimation error induced by the Monte-Carlo estimator. In this section, we show that our approach is an interpolation between between the return decomposition paradigm and uniform reward redistribution, which can be controlled by the hyper-parameter $K$ used in the sampling distribution (see Eq.~\eqref{eq:sampling_distribution}). We present Theorem \ref{thm:loss_decomposition} as a formal characterization of our proposed surrogate objective function.
\begin{restatable}[Loss Decomposition]{theorem}{LossDecomposition} \label{thm:loss_decomposition}
	The surrogate loss function $\mathcal{L}_{\text{Rand-RD}}(\theta)$ can be decomposed to two terms interpolating between return decomposition and uniform reward redistribution.
	\begin{align}\label{eq:loss_decomposition_raw}
	\mathcal{L}_{\text{Rand-RD}}(\theta) &= \mathcal{L}_{\text{RD}}(\theta) + \mathop{\mathbb{E}}_{\tau\sim\mathcal{D}} \Biggl[~\underbrace{\mathop{\text{Var}}_{\mathcal{I}\sim\rho_T(\cdot)}\left[\frac{T}{|\mathcal{I}|}\sum_{t\in\mathcal{I}}\widehat R_{\theta}(s_t, a_t)\right]}_{\text{variance of the Monte-Carlo estimator}}~\Biggr] \\
	\label{eq:loss_decomposition}
	&= \underbrace{\mathcal{L}_{\text{RD}}(\theta)}_{\text{return decomposition}} + \mathop{\mathbb{E}}_{\tau\sim\mathcal{D}} \Biggl[ T^2\cdot  \underbrace{\mathop{\text{Var}}_{(s_t,a_t)\sim \tau}\left[\widehat R_{\theta}(s_t, a_t)\right]}_{\text{uniform reward redistribution}} \cdot \underbrace{\frac{1}{K}\left(1-\frac{K-1}{T-1}\right)}_{\text{interpolation weight}} ~\Biggr],
	\end{align}
	where $K$ denotes the length of sampled subsequences defined in Eq.~\eqref{eq:sampling_distribution}.
\end{restatable}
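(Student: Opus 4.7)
The proof splits naturally into two parts, corresponding to the two equalities in the theorem. The first part derives Eq.~(10) from the definition of $\mathcal{L}_{\text{Rand-RD}}(\theta)$ via a bias–variance decomposition of the inner expectation, and the second part explicitly computes the variance of the Monte-Carlo estimator for sampling without replacement, yielding Eq.~(11). I expect the bias–variance step to be essentially immediate and the finite-population variance computation to be the main (though standard) technical step.

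For the first equality, I would fix a trajectory $\tau$ and view $X_\tau := \frac{T}{|\mathcal{I}|}\sum_{t\in\mathcal{I}}\widehat R_{\theta}(s_t,a_t)$ as a random variable in $\mathcal{I}$ with $R_{\text{ep}}(\tau)$ treated as a constant. The construction of $\rho_T(\cdot)$ in Eq.~(7) is exactly such that every index $t\in\{0,\dots,T-1\}$ is covered by $\mathcal{I}$ with equal probability $K/T$; linearity of expectation then yields $\mathbb{E}_{\mathcal{I}\sim\rho_T}[X_\tau] = \sum_{t=0}^{T-1}\widehat R_\theta(s_t,a_t) = \widehat R_{\text{ep},\theta}(\tau)$, i.e.\ the Monte-Carlo estimator is unbiased. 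Applying the identity $\mathbb{E}[(c-X)^2] = (c-\mathbb{E}[X])^2 + \mathrm{Var}[X]$ to the inner expectation, and then taking the outer expectation over $\tau\sim\mathcal{D}$, produces $\mathcal{L}_{\text{RD}}(\theta)$ from the squared-bias term and the variance term of Eq.~(10).

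For the second equality, I would compute the variance of $X_\tau$ explicitly. Writing $r_t := \widehat R_\theta(s_t,a_t)$ and $\sigma^2_\tau := \mathrm{Var}_{(s_t,a_t)\sim\tau}[r_t] = \frac{1}{T}\sum_{t=0}^{T-1}(r_t-\bar r)^2$ with $\bar r = \frac{1}{T}\sum_t r_t$, the quantity $\frac{1}{K}\sum_{t\in\mathcal{I}} r_t$ is the sample mean from a simple random sample of size $K$ drawn without replacement from the finite population $\{r_0,\dots,r_{T-1}\}$. The classical finite-population formula gives $\mathrm{Var}\!\left[\frac{1}{K}\sum_{t\in\mathcal{I}} r_t\right] = \frac{\sigma^2_\tau}{K}\cdot\frac{T-K}{T-1}$; one can re-derive it by expanding the variance as a sum of $\mathrm{Var}[r_t]$ and $\mathrm{Cov}[r_t,r_{t'}]$ terms, using that for a uniformly random $K$-subset of $\mathbb{Z}_T$, $\Pr[t\in\mathcal{I}] = K/T$ and $\Pr[t,t'\in\mathcal{I}] = K(K-1)/(T(T-1))$ for $t\neq t'$. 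Multiplying through by $T^2$ and rewriting $\frac{T-K}{T-1} = 1 - \frac{K-1}{T-1}$ reproduces the interpolation factor $\frac{1}{K}\!\left(1-\frac{K-1}{T-1}\right)$ in Eq.~(11). Substituting this expression into Eq.~(10) finishes the proof.

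The only mildly subtle point is being careful with the two distinct probabilistic objects attached to $\tau$: the $\mathrm{Var}_{\mathcal{I}\sim\rho_T}$ in Eq.~(10) is taken over the random subset $\mathcal{I}$ with $\tau$ fixed, whereas $\mathrm{Var}_{(s_t,a_t)\sim\tau}$ in Eq.~(11) is the empirical variance of the sequence $\{r_t\}_{t=0}^{T-1}$ along the fixed trajectory $\tau$. The finite-population identity is precisely the bridge between these two variances, and once it is applied inside the outer $\mathbb{E}_{\tau\sim\mathcal{D}}$, the theorem follows.
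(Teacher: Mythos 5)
Your proposal is correct and follows essentially the same route as the paper's proof: an unbiasedness argument plus the bias--variance identity $\mathbb{E}[(c-X)^2]=(c-\mathbb{E}[X])^2+\mathrm{Var}[X]$ for Eq.~\eqref{eq:loss_decomposition_raw}, then the finite-population (sampling-without-replacement) variance formula for Eq.~\eqref{eq:loss_decomposition}. The only difference is cosmetic --- you sketch a re-derivation of the finite-population correction via the inclusion probabilities $\Pr[t\in\mathcal{I}]=K/T$ and $\Pr[t,t'\in\mathcal{I}]=K(K-1)/(T(T-1))$, whereas the paper simply cites the standard result.
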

The proof of Theorem~\ref{thm:loss_decomposition} is based on the bias-variance decomposition formula of mean squared error \citep{kohavi1996bias}. The detailed proofs are deferred to Appendix~\ref{apx:proof}.

\paragraph{Interpretation as Regularization.}
As presented in Theorem~\ref{thm:loss_decomposition}, the randomized decomposition loss can be decomposed to two terms, the deterministic return decomposition loss $\mathcal{L}_{\text{RD}}(\theta)$ and a variance penalty term (see the second term of Eq.~\eqref{eq:loss_decomposition_raw}). The variance penalty term can be regarded as a regularization that is controlled by hyper-parameter $K$. In practical problems, the objective of return decomposition is ill-posed, since the number of trajectory labels is dramatically less than the number of transition samples. There may exist solutions of reward redistribution that formally optimize the least-squares regression loss but serve little functionality to guide the policy learning. Regarding this issue, the variance penalty in randomized return decomposition is a regularizer for reward modeling. It searches for smooth proxy rewards that has low variance within the trajectory. This regularization effect is similar to the mechanism of uniform reward redistribution \citep{gangwani2020learning}, which achieves state-of-the-art performance in the previous literature. In section~\ref{sec:experiments}, our experiments demonstrate that the variance penalty is crucial to the empirical performance of randomized return decomposition.

\paragraph{A Closer Look at Loss Decomposition.}
In addition to the intuitive interpretation of regularization, we will present a detailed characterization of the loss decomposition shown in Theorem~\ref{thm:loss_decomposition}. We interpret this loss decomposition as below:
\begin{enumerate}
	\item Note that the Monte-Carlo estimator used by randomized return decomposition is an unbiased estimation of the proxy episodic return $\widehat R_{\text{ep},\theta}(\tau)$ (see Eq.~\eqref{eq:Monte-Carlo_estimator}). This unbiased property gives the first component of the loss decomposition, i.e, the original return decomposition loss $\mathcal{L}_{\text{RD}}(\theta)$.
	\item Although the Monte-Carlo estimator is unbiased, its variance would contribute to an additional loss term induced by the mean-square operator, i.e., the second component of loss decomposition presented in Eq.~\eqref{eq:loss_decomposition}. This additional term penalizes the variance of the learned proxy rewards under random sampling. This penalty expresses the same mechanism as uniform reward redistribution \citep{gangwani2020learning} in which the episodic return is uniformly redistributed to the state-action pairs in the trajectory.
\end{enumerate}

%\begin{itemize}
%	\item $K=T$ refers to the original version of \textit{return decomposition} \citep{liu2019sequence, efroni2021reinforcement} which is defined by Eq.~\eqref{eq:return_decomposition_loss}. This branch of methods aim to completely decompose the long-term dependencies of the trajectory return. 
%	\item $K=1$ refers to another method named \textit{uniform credit assignment} \citep{gangwani2020learning}. This approach uniformly redistributes the episodic return among the experienced state-action pairs without considering the temporal structure of agent trajectories. Due to its simplicity, this method can effectively scale to long-horizon tasks and thus achieves state-of-the-art performance in practice.
%\end{itemize}

Based on the above discussions, we can analyze the algorithmic properties of randomized return decomposition by connecting with previous studies.

\subsubsection{Surrogate Optimization of Return Decomposition}
Randomized return decomposition conducts a surrogate optimization of the actual return decomposition. Note that the variance penalty term in Eq.~\eqref{eq:loss_decomposition} is non-negative, our loss function $\mathcal{L}_{\text{Rand-RD}}(\theta)$ serves an upper bound estimation of the original loss $\mathcal{L}_{\text{RD}}(\theta)$ as the following statement.
\begin{restatable}[Surrogate Upper Bound]{proposition}{SurrogateLoss} \label{prop:surrogate_loss}
	The randomized return decomposition loss $\mathcal{L}_{\text{Rand-RD}}(\theta)$ is an upper bound of the actual return decomposition loss function $\mathcal{L}_{\text{RD}}(\theta)$, i.e., $\mathcal{L}_{\text{Rand-RD}}(\theta)\geq \mathcal{L}_{\text{RD}}(\theta)$.
\end{restatable}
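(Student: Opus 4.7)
The plan is to deduce Proposition~\ref{prop:surrogate_loss} as an immediate corollary of Theorem~\ref{thm:loss_decomposition}. That theorem supplies the exact identity $\mathcal{L}_{\text{Rand-RD}}(\theta) = \mathcal{L}_{\text{RD}}(\theta) + \mathbb{E}_{\tau\sim\mathcal{D}}\bigl[\text{Var}_{\mathcal{I}\sim\rho_T(\cdot)}[(T/|\mathcal{I}|)\sum_{t\in\mathcal{I}}\widehat{R}_\theta(s_t,a_t)]\bigr]$. The second summand is an expectation of a variance, hence pointwise non-negative and therefore non-negative in expectation. Rearranging yields $\mathcal{L}_{\text{Rand-RD}}(\theta)\geq \mathcal{L}_{\text{RD}}(\theta)$ in a single line.

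For completeness, I would also sketch a self-contained derivation that does not route through the full decomposition of Theorem~\ref{thm:loss_decomposition}. Fix a trajectory $\tau$ and observe that the map $x\mapsto (R_{\text{ep}}(\tau)-x)^2$ is convex. The construction of $\rho_T(\cdot)$ in Eq.~\eqref{eq:Monte-Carlo_estimator} guarantees that $(T/|\mathcal{I}|)\sum_{t\in\mathcal{I}}\widehat{R}_\theta(s_t,a_t)$ is an unbiased estimator of $\widehat{R}_{\text{ep},\theta}(\tau)=\sum_{t=0}^{T-1}\widehat{R}_\theta(s_t,a_t)$. Applying Jensen's inequality to the inner expectation over $\mathcal{I}\sim\rho_T(\cdot)$ lower-bounds the integrand by $\bigl(R_{\text{ep}}(\tau)-\widehat{R}_{\text{ep},\theta}(\tau)\bigr)^2$, and taking the outer expectation over $\tau\sim\mathcal{D}$ recovers exactly $\mathcal{L}_{\text{RD}}(\theta)$.

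There is no genuine obstacle: the proposition is essentially a restatement of the non-negativity of variance, and all substantive calculation sits inside the bias-variance decomposition that establishes Theorem~\ref{thm:loss_decomposition}. The only care required in the write-up is to be explicit about which of the two equivalent routes (invoking Theorem~\ref{thm:loss_decomposition} directly, or invoking Jensen's inequality together with the unbiasedness noted in Eq.~\eqref{eq:Monte-Carlo_estimator}) is being used, since both are short enough that one might otherwise be tempted to assert the bound without justification.
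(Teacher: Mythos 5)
Your proof is correct and matches the paper's argument exactly: the paper likewise derives the bound from the non-negativity of the variance term in Theorem~\ref{thm:loss_decomposition}, and it even notes the Jensen's-inequality route you sketch as an alternative proof. Nothing is missing.
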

Proposition~\ref{prop:surrogate_loss} suggests that optimizing our surrogate loss $\mathcal{L}_{\text{Rand-RD}}(\theta)$ guarantees to optimize an upper bound of the actual return decomposition loss $\mathcal{L}_{\text{RD}}(\theta)$. According to Theorem \ref{thm:loss_decomposition}, the gap between $\mathcal{L}_{\text{Rand-RD}}(\theta)$ and $\mathcal{L}_{\text{RD}}(\theta)$ refers to the variance of subsequence sampling. The magnitude of this gap can be controlled by the hyper-parameter $K$ that refers to the length of sampled subsequences.
\begin{restatable}[Objective Gap]{proposition}{SurrogateGap}
	Let $\mathcal{L}^{(K)}_{\text{Rand-RD}}(\theta)$ denote the randomized return decomposition loss that samples subsequences with length $K$. The gap between $\mathcal{L}^{(K)}_{\text{Rand-RD}}(\theta)$ and $\mathcal{L}_{\text{RD}}(\theta)$ can be reduced by using larger values of hyper-parameter $K$.
	\begin{align} \label{eq:objective_gap}
		\forall \theta,\quad \mathcal{L}^{(1)}_{\text{Rand-RD}}(\theta) \geq \mathcal{L}^{(2)}_{\text{Rand-RD}}(\theta) \geq \cdots \geq \mathcal{L}^{(T-1)}_{\text{Rand-RD}}(\theta) \geq \mathcal{L}^{(T)}_{\text{Rand-RD}}(\theta) = \mathcal{L}_{\text{RD}}(\theta).
	\end{align}
\end{restatable}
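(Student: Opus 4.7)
The plan is to reduce the statement entirely to Theorem~\ref{thm:loss_decomposition} and then verify that the $K$-dependent coefficient appearing there is a non-negative, monotonically non-increasing function of $K$ on $\{1,2,\ldots,T\}$ that vanishes at $K=T$. Since the $K$-independent factors in Eq.~\eqref{eq:loss_decomposition} (namely $T^2$ and the per-trajectory variance $\mathrm{Var}_{(s_t,a_t)\sim\tau}[\widehat R_\theta(s_t,a_t)]$) are non-negative and do not depend on $K$, monotonicity of the coefficient is preserved under multiplication by them and under the outer expectation over $\tau\sim\mathcal{D}$.

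Concretely, I would first isolate the coefficient $f(K) := \frac{1}{K}\bigl(1-\frac{K-1}{T-1}\bigr)$ from Eq.~\eqref{eq:loss_decomposition} and rewrite it in the more transparent form
\begin{equation*}
f(K) \;=\; \frac{T-K}{K(T-1)} \;=\; \frac{1}{T-1}\left(\frac{T}{K}-1\right).
\end{equation*}
From this expression two facts are immediate: (i) $f$ is strictly decreasing in $K$ on $\{1,\ldots,T\}$ because $T/K$ is; and (ii) $f(T)=0$. Substituting these into Theorem~\ref{thm:loss_decomposition} gives $\mathcal{L}^{(K)}_{\text{Rand-RD}}(\theta) = \mathcal{L}_{\text{RD}}(\theta) + c(\theta)\cdot f(K)$, where $c(\theta) := \mathbb{E}_{\tau\sim\mathcal{D}}\bigl[T^2 \,\mathrm{Var}_{(s_t,a_t)\sim\tau}[\widehat R_\theta(s_t,a_t)]\bigr] \geq 0$ is independent of $K$. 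The full chain of inequalities in Eq.~\eqref{eq:objective_gap} and the terminal equality $\mathcal{L}^{(T)}_{\text{Rand-RD}}(\theta)=\mathcal{L}_{\text{RD}}(\theta)$ then follow by evaluating this expression at $K=1,2,\ldots,T$.

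The one bookkeeping subtlety I would address is that trajectories in $\mathcal{D}$ may have differing lengths, so the symbol ``$T$'' and the condition $K\leq T$ should be interpreted trajectory-wise; the monotonicity argument is then applied pointwise inside $\mathbb{E}_{\tau\sim\mathcal{D}}$ and survives averaging, since the inequality $f(K)\geq f(K+1)\geq 0$ holds for every individual trajectory length. I do not anticipate any genuine obstacle in this proof: its entire content is the algebraic simplification of the interpolation weight and the observation that Theorem~\ref{thm:loss_decomposition} already splits $\mathcal{L}^{(K)}_{\text{Rand-RD}}(\theta)$ into a $K$-independent piece plus a non-negative $K$-dependent piece with a clean closed form.
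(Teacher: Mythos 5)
Your proposal is correct and follows essentially the same route as the paper's proof: both appeal directly to the decomposition in Theorem~\ref{thm:loss_decomposition} and observe that the interpolation weight $\frac{1}{K}\bigl(1-\frac{K-1}{T-1}\bigr)$ is non-negative, monotonically decreasing in $K$, and vanishes at $K=T$. Your explicit simplification to $\frac{T-K}{K(T-1)}$ and the remark about handling varying trajectory lengths pointwise inside the expectation are just slightly more detailed versions of the same argument.
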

This gap can be eliminated by choosing $K=T$ in the sampling distribution (see Eq.~\eqref{eq:sampling_distribution}) so that our approach degrades to the original deterministic implementation of return decomposition.

\subsubsection{Generalization of Uniform Reward Redistribution}
The reward redistribution mechanism of randomized return decomposition is a generalization of uniform reward redistribution. To serve intuitions, we start the discussions with the simplest case using $K=1$ in subsequence sampling, in which our approach degrades to the uniform reward redistribution as the following statement.
\begin{restatable}[Uniform Reward Redistribution]{proposition}{DegradeToUniformRewardRedistribution}
	Assume all trajectories have the same length and the parameterization space of $\theta$ serves universal representation capacity. The optimal solution $\theta^\star$ of minimizing $\mathcal{L}^{(1)}_{\text{Rand-RD}}(\theta)$ is stated as follows:
	\begin{align} \label{eq:randomized_uniform_redistribution}
		\widehat R_{\theta^\star}(s,a) = \mathop{\mathbb{E}}_{\tau\sim\mathcal{D}}\bigl[R_{\text{ep}}(\tau)/T \mid (s,a)\in\tau\bigr],
	\end{align}
	where $\mathcal{L}^{(1)}_{\text{Rand-RD}}(\theta)$ denotes the randomized return decomposition loss with $K=1$.
\end{restatable}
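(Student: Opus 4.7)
The plan is to specialize the surrogate loss to $K=1$, exploit the universal representation assumption to perform pointwise optimization, and solve the resulting scalar least-squares problem for each $(s,a)$.

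First, I would expand $\mathcal{L}^{(1)}_{\text{Rand-RD}}(\theta)$ using the definition in Eq.~\eqref{eq:Rand-RD_loss}. With $K=1$, the inner sampling distribution $\rho_T(\cdot)$ reduces to a uniform distribution over singletons $\mathcal{I}=\{t\}$ with $t\in\{0,\ldots,T-1\}$, and the Monte-Carlo estimator collapses to $T\cdot \widehat R_\theta(s_t,a_t)$. Thus
\begin{align*}
\mathcal{L}^{(1)}_{\text{Rand-RD}}(\theta) = \mathop{\mathbb{E}}_{\tau\sim\mathcal{D}}\left[\frac{1}{T}\sum_{t=0}^{T-1}\bigl(R_{\text{ep}}(\tau) - T\widehat R_\theta(s_t,a_t)\bigr)^2\right].
\end{align*}

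Next, I would introduce the auxiliary function $f_\theta(s,a) := T\widehat R_\theta(s,a)$, so the loss becomes a standard weighted least-squares objective whose targets are the episodic returns $R_{\text{ep}}(\tau)$ and whose design weights are the visitation frequencies of $(s,a)$ in $\mathcal{D}$ (counted with timestep multiplicity). Because the parameterization is assumed to have universal representation capacity and all trajectories share the same length $T$ (so the weight $1/T$ factors out), the minimization over $\theta$ decouples across the arguments of $\widehat R_\theta$: for each $(s,a)$ appearing in the support of $\mathcal{D}$, the optimal value $f_{\theta^\star}(s,a)$ is obtained by independently minimizing a quadratic. Setting the derivative of the corresponding scalar objective to zero yields the closed-form
\begin{align*}
f_{\theta^\star}(s,a) = \frac{\mathbb{E}_{\tau\sim\mathcal{D}}\bigl[\sum_{t=0}^{T-1}\mathbf{1}[(s_t,a_t)=(s,a)]\cdot R_{\text{ep}}(\tau)\bigr]}{\mathbb{E}_{\tau\sim\mathcal{D}}\bigl[\sum_{t=0}^{T-1}\mathbf{1}[(s_t,a_t)=(s,a)]\bigr]} = \mathop{\mathbb{E}}_{\tau\sim\mathcal{D}}\bigl[R_{\text{ep}}(\tau)\mid (s,a)\in\tau\bigr],
\end{align*}
where the last equality is the definition of the conditional expectation in Eq.~\eqref{eq:uniform_credit_assignment}. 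Dividing both sides by $T$ gives $\widehat R_{\theta^\star}(s,a) = \mathbb{E}_{\tau\sim\mathcal{D}}[R_{\text{ep}}(\tau)/T\mid (s,a)\in\tau]$, which is exactly the claimed identity.

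The main subtlety, and the place I would be most careful, is the interpretation of the conditioning event $(s,a)\in\tau$ when the same state-action pair may occur several times within a trajectory or across trajectories of differing structure. The derivation above shows that the ``right'' notion of conditioning is the one that weights each occurrence by its empirical timestep frequency, which is precisely what the uniform sampling over $t\in\{0,\ldots,T-1\}$ induces; this matches the convention used for $\widehat R_{\text{IRCR}}$ in Eq.~\eqref{eq:uniform_credit_assignment}. A minor secondary point is to verify that the assumption of equal trajectory length is actually used only to pull $1/T$ out of the outer expectation; otherwise one would obtain a length-weighted version of the same conditional mean, which reduces to the stated formula under the hypothesis.
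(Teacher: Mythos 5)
Your proposal is correct and follows essentially the same route as the paper's proof: specialize to $K=1$, use the universal-representation assumption to decouple the minimization pointwise over $(s,a)$, and identify the minimizer of the resulting scalar quadratic as the conditional mean of $R_{\text{ep}}(\tau)/T$. You are in fact somewhat more explicit than the paper about how the uniform timestep sampling induces occurrence-weighted conditioning, a subtlety the paper's proof passes over silently.
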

A minor difference between Eq.~\eqref{eq:randomized_uniform_redistribution} and the proxy reward designed by \citet{gangwani2020learning} (see Eq.~\eqref{eq:uniform_credit_assignment}) is a multiplier scalar $1/T$. \citet{gangwani2020learning} interprets such a proxy reward mechanism as a trajectory-space smoothing process or a non-committal reward redistribution. Our analysis can give a mathematical characterization to illustrate the objective of uniform reward redistribution. As characterized by Theorem~\ref{thm:loss_decomposition}, uniform reward redistribution conducts an additional regularizer to penalize the variance of per-step proxy rewards. In the view of randomized return decomposition, the functionality of this regularizer is requiring the reward model to reconstruct episodic return from each single-step transition.

By using larger values of hyper-parameter $K$, randomized return decomposition is trained to reconstruct the episodic return from a subsequence of agent trajectory instead of the single-step transition used by uniform reward redistribution. This mechanism is a generalization of uniform reward redistribution, in which we equally assign rewards to subsequences generated by uniformly random sampling. It relies on the concentratability of random sampling, i.e., the average of a sequence can be estimated by a small random subset with sub-linear size. The individual contribution of each transition within the subsequence is further attributed by return decomposition. 

\subsection{Practical Implementation of Randomized Return Decomposition}

In Algorithm \ref{alg:framework}, we integrate randomized return decomposition with policy optimization. It follows an iterative paradigm that iterates between the rewarding modeling and policy optimization modules.

\begin{algorithm}[htp]
	\caption{Policy Optimization with Randomized Return Decomposition}
	\label{alg:framework}
	\begin{algorithmic}[1]
		\State Initialize $\mathcal{D}\gets \emptyset$
		\For{$\ell$ = $1,2, \cdots$}
		\State Collect a rollout trajectory $\tau$ using the current policy.
		\State Store trajectory $\tau$ and feedback $R_{\text{ep}}(\tau)$ into the replay buffer $\mathcal{D}\gets \mathcal{D}\cup\{(\tau, R_{\text{ep}}(\tau))\}$.
		\For{$i$ = $1,2, \cdots$}
		\State Sample $M$ trajectories $\{\tau_j\in\mathcal{D}\}_{j=1}^M$ from the replay buffer.
		\State Sample subsequences $\{\mathcal{I}_j\subseteq\mathbb{Z}_{T_j}\}_{j=1}^M$ for these trajectories.
		\State Estimate randomized return decomposition loss $\widehat{\mathcal{L}}_{\text{Rand-RD}}(\theta)$,
		\begin{align} \label{eq:mini-batch_estimation}
		\widehat{\mathcal{L}}_{\text{Rand-RD}}(\theta) = \frac{1}{M}\sum_{j=1}^M \left[\biggl(R_{\text{ep}}(\tau_j)-\frac{T_j}{|\mathcal{I}_j|}\sum_{t\in\mathcal{I}_j}\widehat R_{\theta}(s_{j,t}, a_{j,t})\biggr)^2\right],
		\end{align}
		\qquad\quad where $T_j$ denotes the length of trajectory $\tau_j=\langle s_{j,1}, a_{j,1},\cdots,s_{j,T_j}\rangle$.
		\State Perform a gradient update on the reward model $\widehat R_\theta$,
		\begin{align} \label{eq:mini-batch_gradient_descent} 
		\theta\gets \theta - \alpha \nabla_\theta \widehat{\mathcal{L}}_{\text{Rand-RD}}(\theta),
		\end{align}
		\qquad\quad where $\alpha$ denotes the learning rate.
		\EndFor
		\State Perform policy optimization using the learned proxy reward function $\widehat R_\theta(s,a)$.
		\EndFor
	\end{algorithmic}
\end{algorithm}

As presented in Eq.~\eqref{eq:mini-batch_estimation} and Eq.~\eqref{eq:mini-batch_gradient_descent}, the optimization of our loss function $\mathcal{L}_{\text{Rand-RD}}(\theta)$ can be easily conducted by mini-batch gradient descent. This surrogate loss function only requires computations on short-length subsequences. It provides a scalable implementation for return decomposition that can be generalized to long-horizon tasks with manageable computation costs. In section \ref{sec:experiments}, we will show that this simple implementation can also achieve state-of-the-art performance in comparison to other existing methods.

%\subsection{Concentratability of Dense Reward Signals}
%
%Our reward redistribution mechanism is motivated by a common belief that reinforcement learning algorithms prefer \textit{dense} reward signals in practice \citep{zheng2018learning, jiang2019hierarchical, gangwani2019learning, gangwani2020learning, okudo2021reward}. To ensure the learned proxy rewards are dense signals, we propose to conduct a structural constraint on the redistributed rewards. Formally, we note that a \textit{dense} reward function satisfies the \textit{concentratability} property as the following statement:
%\begin{definition}[Concentratability]
%	Let $\widehat R(s,a)$ denote a dense reward function. The episodic return $\widehat R_{\text{ep}}(\tau)=\sum_{t=0}^{T-1}\widehat R(s_t,a_t)$ can be properly estimated by randomly sampling a subset of state-action pairs $\{(s_t,a_t)\}\subset\tau$ from the agent trajectory. i.e.,
%	\begin{align} \label{eq:concentratability}
%	R_{\text{ep}}(\tau) = \sum_{t=0}^{T-1}\widehat R(s_t,a_t) \approx \frac{T}{|\mathcal{I}|}\sum_{t\in\mathcal{I}}\widehat R(s_t, a_t),
%	\end{align}
%	where $\mathcal{I}\subseteq [T]$ denotes a random subset of timestep indices with size $|\mathcal{I}|\ll T$.
%\end{definition}
\section{Experiments} \label{sec:experiments}

In this section, we investigate the empirical performance of our proposed methods by conducting experiments on a suite of MuJoCo benchmark tasks with episodic rewards. We compare our approach with several baseline algorithms in the literature and conduct an ablation study on subsequence sampling that is the core component of our algorithm.

\subsection{Performance Evaluation on MuJoCo Benchmark with Episodic Rewards}

\paragraph{Experiment Setting.}
We adopt the same experiment setting as \citet{gangwani2020learning} to compare the performance of our approach with baseline algorithms. The experiment environments is based on the MuJoCo locomotion benchmark tasks created by OpenAI Gym \citep{brockman2016openai}. These tasks are long-horizon with maximum trajectory length $T=1000$, i.e., the task horizon is definitely longer than the batch size used by the standard implementation of mini-batch gradient estimation. We modify the reward function of these environments to set up an episodic-reward setting. Formally, on non-terminal states, the agent will receive a zero signal instead of the per-step dense rewards. The agent can obtain the episodic feedback $R_{\text{ep}}(\tau)$ at the last step of the rollout trajectory, in which $R_{\text{ep}}(\tau)$ is computed by the summation of per-step instant rewards given by the standard setting. We evaluate the performance of our proposed methods with the same configuration of hyper-parameters in all environments. A detailed description of implementation details is included in Appendix~\ref{apx:implementation_details}.

We evaluate two implementations of randomized return decomposition (RRD):
\begin{itemize}[leftmargin=25pt]
	\item \textbf{RRD (ours)} denotes the default implementation of our approach. We train a reward model using randomized return decomposition loss $\mathcal{L}_{\text{Rand-RD}}$, in which we sample subsequences with length $K=64$ in comparison to the task horizon $T=1000$. The reward model $\widehat R_\theta$ is parameterized by a two-layer fully connected network. The policy optimization module is implemented by soft actor-critic \citep[SAC;][]{haarnoja2018soft}. \footnote{The source code of our implementation is available at \url{https://github.com/Stilwell-Git/Randomized-Return-Decomposition}.}
	\item \textbf{RRD-$\mathcal{L}_{\text{RD}}$ (ours)} is an alternative implementation that optimizes $\mathcal{L}_{\text{RD}}$ instead of $\mathcal{L}_{\text{Rand-RD}}$. Note that Theorem~\ref{thm:loss_decomposition} gives a closed-form characterization of the gap between $\mathcal{L}_{\text{Rand-RD}}(\theta)$ and $\mathcal{L}_{\text{RD}}(\theta)$, which is represented by the variance of the learned proxy rewards. By subtracting an unbiased variance estimation from loss function $\mathcal{L}_{\text{Rand-RD}}(\theta)$, we can estimate loss function $\mathcal{L}_{\text{RD}}(\theta)$ by sampling short subsequences. It gives a computationally efficient way to optimize $\mathcal{L}_{\text{RD}}(\theta)$. We include this alternative implementation to reveal the functionality of the regularization given by variance penalty. A detailed description is deferred to Appendix~\ref{apx:alternative_implementation}.
\end{itemize}

We compare with several existing methods for episodic or delayed reward settings:
\begin{itemize}[leftmargin=25pt]
	\item \textbf{IRCR} \citep{gangwani2020learning} is an implementation of uniform reward redistribution. The reward redistribution mechanism of IRCR is non-parametric, in which the proxy reward of a transition is set to be the normalized value of corresponding trajectory return. It is equivalent to use a Monte-Carlo estimator of Eq.~\eqref{eq:uniform_credit_assignment}. Due to the ease of implementation, this method achieves state-of-the-art performance in the literature.
	\item \textbf{RUDDER} \citep{arjona2019rudder} is based on the idea of return decomposition but does not directly optimize $\mathcal{L}_{\text{RD}}(\theta)$. Instead, it trains a return predictor based on trajectory, and the step-wise credit is assigned by the prediction difference between two consecutive states. By using the warm-up technique of LSTM, this transform prevents its training computation costs from depending on the task horizon $T$ so that it is adaptive to long-horizon tasks.
	\item \textbf{GASIL} \citep{guo2018generative}, generative adversarial self-imitation learning, is a generalization of GAIL \citep{ho2016generative}. It formulates an imitation learning framework by imitating  best trajectories in the replay buffer. The proxy rewards are given by a discriminator that is trained to classify the agent and expert trajectories.
	\item \textbf{LIRPG} \citep{zheng2018learning} aims to learn an intrinsic reward function from sparse environment feedback. Its policy is trained to optimized the sum of the extrinsic and intrinsic rewards. The parametric intrinsic reward function is updated by meta-gradients to optimize the actual extrinsic rewards achieved by the policy
\end{itemize}

\begin{figure}[t]
	\centering
	\includegraphics[width=0.98\linewidth]{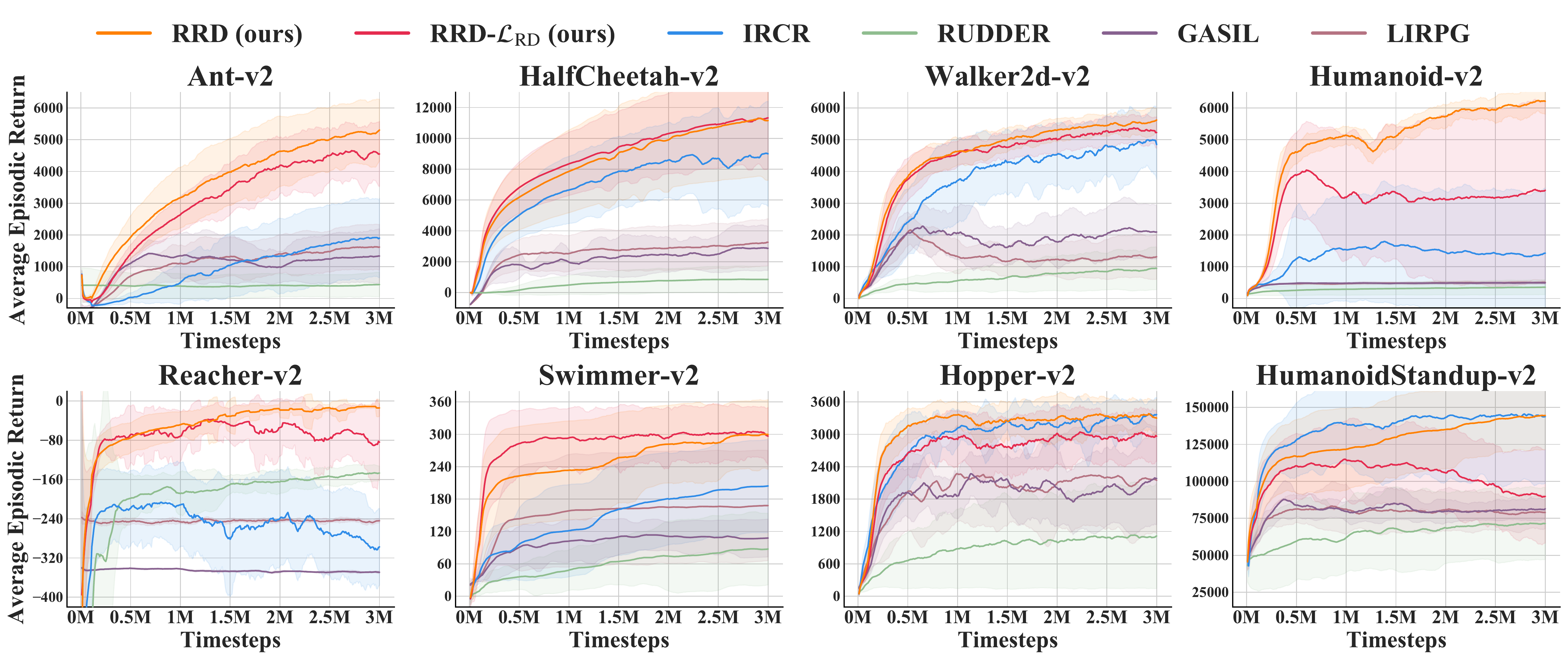}
	\vspace{-0.05in}
	\caption{Learning curves on a suite of MuJoCo benchmark tasks with episodic rewards. All curves for MuJoCo benchmark are plotted from 30 runs with random initializations. The shaded region indicates the standard deviation. To make the comparison more clear, the curves are smoothed by averaging 10 most recent evaluation points. We set up an evaluation point every $10^4$ timesteps.}
	\label{fig:mujoco-experiments}
	\vspace{-0.15in}
\end{figure}

\paragraph{Overall Performance Comparison.}
As presented in Figure~\ref{fig:mujoco-experiments}, randomized return decomposition generally outperforms baseline algorithms. Our approach can achieve higher sample efficiency and produce better policies after convergence. RUDDER is an implementation of return decomposition that represents single-step rewards by the differences between the return predictions of two consecutive states. This implementation maintains high computation efficiency but long-term return prediction is a hard optimization problem and requires on-policy samples. In comparison, RRD is a more scalable and stable implementation which can better integrate with off-policy learning for improving sample efficiency. The uniform reward redistribution considered by IRCR is simple to implement but cannot extract the temporal structure of episodic rewards. Thus the final policy quality produced by RRD is usually better than that of IRCR. GASIL and LIRPG aim to construct auxiliary reward functions that have high correlation to the environmental return. These two methods cannot achieve high sample efficiency since their objectives require on-policy training.

\paragraph{Variance Penalty as Regularization.}
Figure~\ref{fig:mujoco-experiments} also compares two implementations of randomized return decomposition. In most testing environments, RRD optimizing $\mathcal{L}_{\text{Rand-RD}}$ outperforms the unbiased implementation RRD-$\mathcal{L}_{\text{RD}}$. We consider RRD using $\mathcal{L}_{\text{Rand-RD}}$ as our default implementation since it performs better and its objective function is simpler to implement. As discussed in section~\ref{sec:analysis}, the variance penalty conducted by RRD aims to minimize the variance of the Monte-Carlo estimator presented in Eq.~\eqref{eq:Monte-Carlo_estimator}. It serves as a regularization to restrict the solution space of return decomposition, which gives two potential effects: (1) RRD prefers smooth proxy rewards when the expressiveness capacity of reward network over-parameterizes the dataset. (2) The variance of mini-batch gradient estimation can also be reduced when the variance of Monte-Carlo estimator is small. In practice, this regularization would benefit the training stability. As presented in Figure~\ref{fig:mujoco-experiments}, RRD achieves higher sample efficiency than RRD-$\mathcal{L}_{\text{RD}}$ in most testing environments. The quality of the learned policy of RRD is also better than that of RRD-$\mathcal{L}_{\text{RD}}$. It suggests that the regularized reward redistribution can better approximate the actual environmental objective.

\subsection{Ablation Studies}

We conduct an ablation study on the hyper-parameter $K$ that represent the length of subsequences used by randomized return decomposition. As discussed in section~\ref{sec:analysis}, the hyper-parameter $K$ controls the interpolation ratio between return decomposition and uniform reward redistribution. It trades off the accuracy of return reconstruction and variance regularization. In Figure~\ref{fig:ablation-studies}, we evaluate RRD with a set of choices of hyper-parameter $K\in\{1,8,16,32,64,128\}$. The experiment results show that, although the sensitivity of this hyper-parameter depends on the environment, a relatively large value of $K$ generally achieves better performance, since it can better approximate the environmental objective. In this experiment, we ensure all runs use the same input size for mini-batch training, i.e., using larger value of $K$ leads to less number of subsequences in the mini-batch. More specifically, in this ablation study, all algorithm instances estimate the loss function $\mathcal{L}_{\text{Rand-RD}}$ using a mini-batch containing 256 transitions. We consider $K=64$ as our default configuration. As presented in Figure~\ref{fig:ablation-studies}, larger values give marginal improvement in most environments. The benefits of larger values of $K$ can only be observed in HalfCheetah-v2. We note that HalfCheetah-v2 does not have early termination and thus has the longest average horizon among these locomotion tasks. It highlights the trade-off between the weight of regularization and the bias of subsequence estimation.

% We consider $K=64$ as our default configuration for the choice of hyper-parameter $K$, since it balances the weight of variance regularization and the number of subsequences for mini-batch gradient estimation. As presented in Figure~\ref{fig:ablation-studies}, larger values give marginal benefits. In some cases, such as \texttt{Ant-v2}, we can observe a slight drop in the performance of $K=128$, which highlights the trade-off.

\begin{figure}[t]
	\centering
	\includegraphics[width=0.98\linewidth]{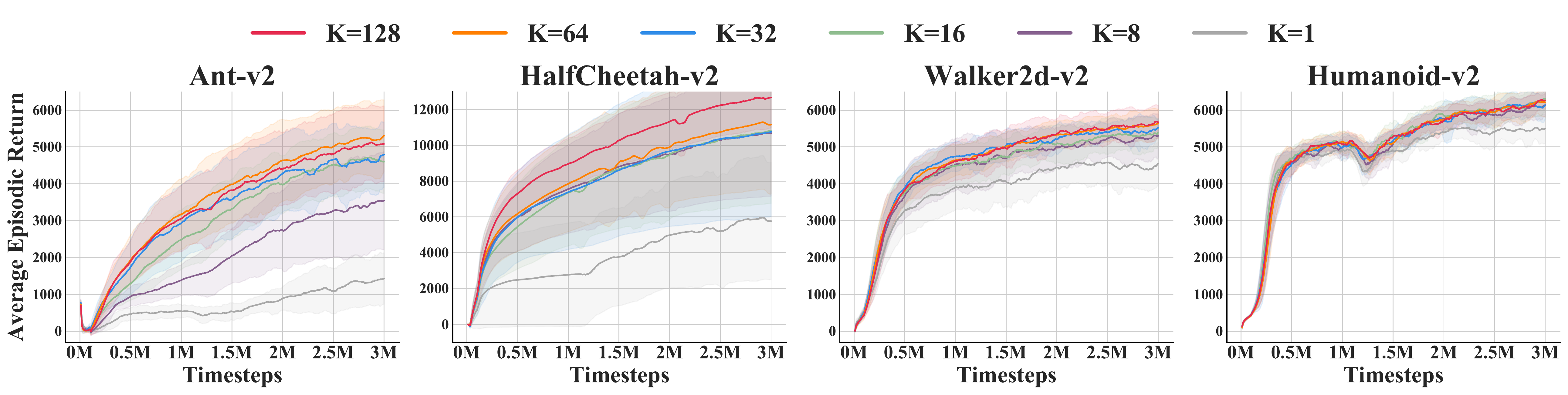}
	\vspace{-0.1in}
	\caption{Learning curves of RRD with different choices of hyper-parameter $K$. The curves with $K=64$ correspond to the default implementation of RRD presented in Figure~\ref{fig:mujoco-experiments}.}
	\label{fig:ablation-studies}
	\vspace{-0.15in}
\end{figure}
\section{Related Work} \label{sec:related_work}

\vspace{-0.08in}
\paragraph{Reward Design.}
The ability of RL agents highly depends on the designs of reward functions. It is widely observed that reward shaping can accelerate learning \citep{mataric1994reward, ng1999, devlin2011empirical, wu2017training, song2019playing}. Many previous works study how to automatically design auxiliary reward functions for efficient reinforcement learning. A famous paradigm, inverse RL \citep{ng2000algorithms, fu2018learning}, considers to recover a reward function from expert demonstrations. Another branch of work is learning an intrinsic reward function that guides the agent to maximize extrinsic objective \citep{sorg2010reward, guo2016deep}. Such an intrinsic reward function can be learned through meta-gradients \citep{zheng2018learning, zheng2020can} or self-imitation \citep{guo2018generative, gangwani2019learning}. A recent work \citep{abel2021expressivity} studies the expressivity of Markov rewards and proposes algorithms to design Markov rewards for three notions of abstract tasks.

\vspace{-0.08in}
\paragraph{Temporal Credit Assignment.}
Another methodology for tackling long-horizon sequential decision problems is assigning credits to emphasize the contribution of each single step over the temporal structure. These methods directly consider the specification of the step values instead of manipulating the reward function. The simplest example is studying how the choice of discount factor $\gamma$ affects the policy learning \citep{petrik2008biasing, jiang2015dependence, fedus2019hyperbolic}. Several previous works consider to extend the $\lambda$-return mechanism \citep{sutton1988learning} to a more generalized credit assignment framework, such as adaptive $\lambda$ \citep{xu2018meta} and pairwise weights \citep{zheng2021pairwise}. RUDDER \citep{arjona2019rudder} proposes a return-equivalent formulation for the credit assignment problem and establish theoretical analyses \citep{holzleitner2021convergence}. Aligned-RUDDER \citep{patil2020align} considers to use expert demonstrations for higher sample efficiency. \citet{harutyunyan2019hindsight} opens up a new family of algorithms, called hindsight credit assignment, that attributes the credits from a backward view. In Appendix~\ref{apx:related_work}, we cover more topics of related work and discuss the connections to the problem focused by this paper.

\vspace{-0.01in}
\section{Conclusion}

\vspace{-0.01in}
In this paper, we propose randomized return decomposition (RRD), a novel reward redistribution algorithm, to tackle the episodic reinforcement learning problem with trajectory feedback. RRD uses a Monte-Carlo estimator to establish a surrogate optimization problem of return decomposition. This surrogate objective implicitly conducts a variance reduction penalty as regularization. We analyze its algorithmic properties by connecting with previous studies in reward redistribution. Our experiments demonstrate that RRD outperforms previous methods in terms of both sample efficiency and policy quality. The basic idea of randomized return decomposition can potentially generalize to other related problems with sum-decomposition structure, such as preference-based reward modeling \citep{christiano2017deep} and multi-agent value decomposition \citep{sunehag2018value}. It is also promising to consider non-linear decomposition as what is explored in multi-agent value factorization \citep{rashid2018qmix}. We leave these investigations as our future work.

\subsubsection*{Acknowledgments}

The authors would like to thank Kefan Dong for insightful discussions. This work is supported by the National Science Foundation under Grant CCF-2006526.

\bibliography{ref}
\bibliographystyle{iclr2022_conference}

\clearpage

\appendix

\section{Omitted Proofs} \label{apx:proof}

\LossDecomposition*
	
\begin{proof}
	First, we note that random sampling serves an unbiased estimation. i.e.,
	\begin{align*} %\label{eq:surrogate_proof_1}
		\mathop{\mathbb{E}}_{\mathcal{I}\sim\rho_T(\cdot)}\left[\frac{T}{|\mathcal{I}|}\sum_{t\in\mathcal{I}}\widehat R_{\theta}(s_t, a_t)\right] = \sum_{t=0}^{T-1} \widehat R_{\theta}(s_t,a_t)=\widehat R_{\text{ep},\theta}(\tau).
	\end{align*}
	
	We can decompose our loss function $\mathcal{L}_{\text{Rand-RD}}(\theta)$ as follows:
	\begin{align*}
		\mathcal{L}_{\text{Rand-RD}}(\theta) &= \mathop{\mathbb{E}}_{\tau\sim\mathcal{D}} \left[\mathop{\mathbb{E}}_{\mathcal{I}\sim\rho_T(\cdot)}\left[\biggl(R_{\text{ep}}(\tau)-\frac{T}{|\mathcal{I}|}\sum_{t\in\mathcal{I}}\widehat R_{\theta}(s_t, a_t)\biggr)^2\right]\right] \\
		&= \mathop{\mathbb{E}}_{\tau\sim\mathcal{D}} \left[\mathop{\mathbb{E}}_{\mathcal{I}\sim\rho_T(\cdot)}\left[\biggl(R_{\text{ep}}(\tau)-\widehat R_{\text{ep},\theta}(\tau)+\widehat R_{\text{ep},\theta}(\tau)-\frac{T}{|\mathcal{I}|}\sum_{t\in\mathcal{I}}\widehat R_{\theta}(s_t, a_t)\biggr)^2\right]\right] \\
		&= \underbrace{\mathop{\mathbb{E}}_{\tau\sim\mathcal{D}} \left[\mathop{\mathbb{E}}_{\mathcal{I}\sim\rho_T(\cdot)}\left[\biggl(R_{\text{ep}}(\tau)-\widehat R_{\text{ep},\theta}(\tau)\biggr)^2\right]\right]}_{=\mathcal{L}_{\text{RD}}(\theta)} \\
		&\quad + \mathop{\mathbb{E}}_{\tau\sim\mathcal{D}} \Biggl[2\bigl(R_{\text{ep}}(\tau)-\widehat R_{\text{ep},\theta}(\tau)\bigr)\cdot \underbrace{\mathop{\mathbb{E}}_{\mathcal{I}\sim\rho_T(\cdot)}\left[\biggl(\widehat R_{\text{ep},\theta}(\tau)-\frac{T}{|\mathcal{I}|}\sum_{t\in\mathcal{I}}\widehat R_{\theta}(s_t, a_t)\biggr)\right]}_{=0}~\Biggr] \\
		&\quad + \mathop{\mathbb{E}}_{\tau\sim\mathcal{D}} \Biggl[~\underbrace{\mathop{\mathbb{E}}_{\mathcal{I}\sim\rho_T(\cdot)}\left[\biggl(\widehat R_{\text{ep},\theta}(\tau)-\frac{T}{|\mathcal{I}|}\sum_{t\in\mathcal{I}}\widehat R_{\theta}(s_t, a_t)\biggr)^2\right]}_{=\mathop{\text{Var}}\left[(T/|\mathcal{I}|)\cdot \sum_{t\in\mathcal{I}}\widehat R_{\theta}(s_t, a_t)\right]}~\Biggr] \\
		&= \mathcal{L}_{\text{RD}}(\theta) + \mathop{\mathbb{E}}_{\tau\sim\mathcal{D}} \Biggl[\mathop{\text{Var}}_{\mathcal{I}\sim\rho_T(\cdot)}\left[\frac{T}{|\mathcal{I}|}\sum_{t\in\mathcal{I}}\widehat R_{\theta}(s_t, a_t)\right]\Biggr].
	\end{align*}
	
	Note our sampling distribution defined in Eq.~\eqref{eq:sampling_distribution} refers to ``\textit{sampling without replacement}'' \citep{singh2003advanced} whose variance can be further decomposed as follows:
	\begin{align*}
		\mathcal{L}_{\text{Rand-RD}}(\theta)
		&= \mathcal{L}_{\text{RD}}(\theta) + \mathop{\mathbb{E}}_{\tau\sim\mathcal{D}} \Biggl[\mathop{\text{Var}}_{\mathcal{I}\sim\rho_T(\cdot)}\left[\frac{T}{|\mathcal{I}|}\sum_{t\in\mathcal{I}}\widehat R_{\theta}(s_t, a_t)\right]\Biggr] \\
		&= \mathcal{L}_{\text{RD}}(\theta) + \mathop{\mathbb{E}}_{\tau\sim\mathcal{D}} \Biggl[ T^2\cdot  \mathop{\text{Var}}_{(s_t,a_t)\sim \tau}\left[\widehat R_{\theta}(s_t, a_t)\right] \cdot \frac{1}{K}\left(1-\frac{K-1}{T-1}\right)\Biggr].
	\end{align*}
\end{proof}

The proof of Theorem~\ref{thm:loss_decomposition} follows a particular form of bias-variance decomposition formula \citep{kohavi1996bias}. Similar decomposition form can also be found in other works in the literature of reinforcement learning \citep{antos2008learning}.

\SurrogateLoss*

\begin{proof}
	Note that the second term of Eq.~\eqref{eq:loss_decomposition_raw} in Theorem \ref{thm:loss_decomposition} expresses the variance of a Monte-Carlo estimator which is clearly non-negative. It directly gives $\mathcal{L}_{\text{Rand-RD}}(\theta)\geq\mathcal{L}_{\text{RD}}(\theta)$.
\end{proof}

An alternative proof of Proposition \ref{prop:surrogate_loss} can be directly given by Jensen's inequality.

\SurrogateGap*

\begin{proof}
	In Eq.~\eqref{eq:loss_decomposition} of Theorem~\ref{thm:loss_decomposition}, the last term $\frac{1}{K}\left(1-\frac{K-1}{T-1}\right)$ monotonically decreases as the hyper-parameter $K$ increases. When $K=T$, this coefficient is equal to zero. It derives Eq.~\eqref{eq:objective_gap} in the given statement.
\end{proof}

\DegradeToUniformRewardRedistribution*

\begin{proof}
	Note that we assume all trajectories have the same length. The optimal solution of this least-squares problem is given by
	\begin{align*}
		\widehat R_{\theta^\star}(s,a) &= \min_{r\in\mathbb{R}} \mathop{\mathbb{E}}_{\tau\sim\mathcal{D}}\left[\left.\left(R_{\text{ep}}(\tau)-T\cdot r\right)^2\right|(s,a)\in\tau\right] \\
		&= \min_{r\in\mathbb{R}} \mathop{\mathbb{E}}_{\tau\sim\mathcal{D}}\left[\frac{1}{T^2}\left.\left(R_{\text{ep}}(\tau)/T- r\right)^2\right|(s,a)\in\tau\right] \\
		&= \min_{r\in\mathbb{R}} \frac{1}{T^2}\mathop{\mathbb{E}}_{\tau\sim\mathcal{D}}\left[\left.\left(R_{\text{ep}}(\tau)/T- r\right)^2\right|(s,a)\in\tau\right] \\
		&= \min_{r\in\mathbb{R}} \mathop{\mathbb{E}}_{\tau\sim\mathcal{D}}\left[\left.\left(R_{\text{ep}}(\tau)/T- r\right)^2\right|(s,a)\in\tau\right] \\
		&= \mathop{\mathbb{E}}_{\tau\sim\mathcal{D}}\bigl[R_{\text{ep}}(\tau)/T \mid (s,a)\in\tau\bigr],
	\end{align*}
	which depends on the trajectory distribution in dataset $\mathcal{D}$.
\end{proof}

If we relax the assumption that all trajectories have the same length, the solution of the above least-squares problem would be a weighted expectation as follows:
\begin{align*}
	\widehat R_{\theta^\star}(s,a) &= \min_{r\in\mathbb{R}} \mathop{\mathbb{E}}_{\tau\sim\mathcal{D}}\left[\left.\left(R_{\text{ep}}(\tau)-T_\tau\cdot r\right)^2\right|(s,a)\in\tau\right] \\
	&= \min_{r\in\mathbb{R}} \mathop{\mathbb{E}}_{\tau\sim\mathcal{D}}\left[T_\tau^2\cdot \left.\left(R_{\text{ep}}(\tau)/T_\tau-r\right)^2\right|(s,a)\in\tau\right] \\
	&= \frac{\sum_{\tau\in\mathcal{D}:(s,a)\in\tau} T_\tau \cdot R_{\text{ep}}(\tau)}{\sum_{\tau\in\mathcal{D}:(s,a)\in\tau} T_\tau^2},
\end{align*}
where $T_\tau$ denotes the length of trajectory $\tau$. This solution can still be interpreted as a uniform reward redistribution, in which the dataset distribution is prioritized by the trajectory length.
\section{Experiment Settings and Implementation Details} \label{apx:implementation_details}

\subsection{MuJoCo Benchmark with Episodic Rewards}

\paragraph{MuJoCo Benchmark with Episodic Rewards.}
We adopt the same experiment setting as \citet{gangwani2020learning} and compare our approach with baselines in a suite of MuJoCo locomotion benchmark tasks with episodic rewards. This experiment setting is commonly used in the literature \citep{mania2018simple, guo2018generative, liu2019sequence, arjona2019rudder, gangwani2019learning, gangwani2020learning}. The environment simulator is based on OpenAI Gym \citep{brockman2016openai}. These tasks are long-horizon with maximum trajectory length $T=1000$. We modify the reward function of these environments to set up an episodic-reward setting. Formally, on non-terminal states, the agent will receive a zero signal instead of the per-step dense rewards. The agent can obtain the episodic feedback $R_{\text{ep}}(\tau)$ at the last step of the rollout trajectory, in which $R_{\text{ep}}(\tau)$ is computed by the summation of per-step instant rewards given by the standard setting.

\paragraph{Hyper-Parameter Configuration For MuJoCo Experiments.}
In MuJoCo experiments, the policy optimization module of RRD is implemented based on soft actor-critic \citep[SAC;][]{haarnoja2018soft}. We evaluate the performance of our proposed methods with the same configuration of hyper-parameters in all environments. The hyper-parameters of the back-end SAC follow the official technical report \citep{haarnoja2018soft2}. We summarize our default configuration of hyper-parameters as the following table:

\begin{table}[h]
	\centering
	\begin{tabular}{c c}
		\toprule
		Hyper-Parameter & Default Configuration \\
		\midrule
		discount factor $\gamma$ & 0.99 \\
		\midrule
		\texttt{\#} hidden layers (all networks) & 2 \\
		\texttt{\#} neurons per layer & 256 \\
		activation & ReLU \\
		optimizer (all losses) & Adam \citep{kingma2015adam} \\
		learning rate & $3\cdot 10^{-4}$ \\
		\midrule
		initial temperature $\alpha_{\text{init}}$ & 1.0 \\
		target entropy & $-\dim(\mathcal{A})$ \\
		Polyak-averaging coefficient & 0.005 \\
		\texttt{\#} gradient steps per environment step & 1 \\
		\texttt{\#} gradient steps per target update & 1 \\
		\midrule
		\texttt{\#} transitions in replay buffer & $10^6$ \\
		\texttt{\#} transitions in mini-batch for training SAC & 256 \\
		\texttt{\#} transitions in mini-batch for training $\widehat R_\theta$ & 256 \\
		\texttt{\#} transitions per subsequence ($K$) & 64 \\
		\texttt{\#} subsequences in mini-batch for training $\widehat R_\theta$ & 4 \\
		\bottomrule
	\end{tabular}
	\caption{The hyper-parameter configuration of RRD in MuJoCo experiments.}
	\label{table:hyper-parameter-mujoco}
\end{table}

In addition to SAC, we also provide the implementations upon DDPG \citep{lillicrap2016continuous} and TD3 \citep{fujimoto2018addressing} in our \href{https://github.com/Stilwell-Git/Randomized-Return-Decomposition}{Github repository}.

\subsection{Atari Benchmark with Episodic Rewards}

\paragraph{Atari Benchmark with Episodic Rewards.}
In addition, we conduct experiments in a suite of Atari games with episodic rewards. The environment simulator is based on OpenAI Gym \citep{brockman2016openai}. Following the standard Atari pre-processing proposed by \citet{mnih2015human}, we rescale each RGB frame to an $84\times 84$ luminance map, and the observation is constructed as a stack of 4 recent luminance maps. We modify the reward function of these environments to set up an episodic-reward setting. Formally, on non-terminal states, the agent will receive a zero signal instead of the per-step dense rewards. The agent can obtain the episodic feedback $R_{\text{ep}}(\tau)$ at the last step of the rollout trajectory, in which $R_{\text{ep}}(\tau)$ is computed by the summation of per-step instant rewards given by the standard setting.

\paragraph{Hyper-Parameter Configuration For Atari Experiments.}
In Atari experiments, the policy optimization module of RRD is implemented based on deep Q-network \citep[DQN;][]{mnih2015human}. We evaluate the performance of our proposed methods with the same configuration of hyper-parameters in all environments. The hyper-parameters of the back-end DQN follow the technical report \citep{castro2018dopamine}. We summarize our default configuration of hyper-parameters as the following table:

\begin{table}[h]
	\centering
	\begin{tabular}{c c}
		\toprule
		Hyper-Parameter & Default Configuration \\
		\midrule
		discount factor $\gamma$ & 0.99 \\
		\texttt{\#} stacked frames in agent observation & 4 \\
		\texttt{\#} \texttt{noop} actions while starting a new episode & 30 \\
		\midrule
		network architecture & DQN \citep{mnih2015human} \\
		optimizer for Q-values & Adam \citep{kingma2015adam} \\
		learning rate for Q-values & $6.25\cdot 10^{-5}$ \\
		\midrule
		optimizer for $\widehat R_\theta$ & Adam \citep{kingma2015adam} \\
		learning rate for $\widehat R_\theta$ & $3\cdot 10^{-4}$ \\
		\midrule
		exploration strategy & $\epsilon$-greedy \\
		$\epsilon$ decaying range - start value & 1.0 \\
		$\epsilon$ decaying range - end value & 0.01 \\
		\texttt{\#} timesteps for $\epsilon$ decaying schedule & 250000 \\
		\texttt{\#} gradient steps per environment step & 0.25 \\
		\texttt{\#} gradient steps per target update & 8000 \\
		\midrule
		\texttt{\#} transitions in replay buffer & $10^6$ \\
		\texttt{\#} transitions in mini-batch for training DQN & 32 \\
		\texttt{\#} transitions in mini-batch for training $\widehat R_\theta$ & 32 \\
		\texttt{\#} transitions per subsequence ($K$) & 32 \\
		\texttt{\#} subsequences in mini-batch for training $\widehat R_\theta$ & 1 \\
		\bottomrule
	\end{tabular}
	\caption{The hyper-parameter configuration of RRD in Atari experiments.}
	\label{table:hyper-parameter-atari}
\end{table}

\subsection{An Alternative Implementation of Randomized Return Decomposition} \label{apx:alternative_implementation}

Recall that the major practical barrier of the least-squares-based return decomposition method specified by $\mathcal{L}_{\text{RD}}(\theta)$ is its scalability in terms of the computation costs. The trajectory-wise episodic reward is the only environmental supervision for reward modeling. Computing the loss function $\mathcal{L}_{\text{RD}}(\theta)$ with a single episodic reward label requires to enumerate all state-action pairs along the whole trajectory.

Theorem~\ref{thm:loss_decomposition} motivates an unbiased implementation of randomized return decomposition that optimizes $\mathcal{L}_{\text{RD}}(\theta)$ instead of $\mathcal{L}_{\text{Rand-RD}}(\theta)$. By rearranging the terms of Eq.~\eqref{eq:loss_decomposition}, we can obtain the difference between $\mathcal{L}_{\text{Rand-RD}}(\theta)$ and $\mathcal{L}_{\text{RD}}(\theta)$ as follows:
\begin{align*}
	\mathcal{L}_{\text{Rand-RD}}(\theta) - \mathcal{L}_{\text{RD}}(\theta) = \mathop{\text{Var}}_{\mathcal{I}\sim\rho_T(\cdot)}\left[\frac{T}{|\mathcal{I}|}\sum_{t\in\mathcal{I}}\widehat R_{\theta}(s_t, a_t)\right].
\end{align*}

Note our sampling distribution $\rho_T(\cdot)$ defined in Eq.~\eqref{eq:sampling_distribution} refers to ``\textit{sampling without replacement}'' \citep{singh2003advanced} whose variance can be estimated as follows:
\begin{align*}
\mathop{\text{Var}}_{\mathcal{I}\sim\rho_T(\cdot)}\left[\frac{T}{|\mathcal{I}|}\sum_{t\in\mathcal{I}}\widehat R_{\theta}(s_t, a_t)\right]
&= T^2\cdot \mathop{\mathbb{E}}_{\mathcal{I}\sim\rho_T(\cdot)} \left[\frac{T-K}{T}\cdot \frac{\sum_{t\in\mathcal{I}}\left(\widehat R_\theta(s_t,a_t)- \bar R_\theta(\mathcal{I};\tau)\right)^2}{K(K-1)}\right],
\end{align*}
where $\bar R_\theta(\mathcal{I};\tau) = \frac{1}{|\mathcal{I}|}\sum_{t\in\mathcal{I}}\widehat R_\theta(s_t,a_t)$. Thus we can obtain an unbiased estimation of this variance penalty by sampling a subsequence $\mathcal{I}$. By subtracting this estimation from ${\widehat {\mathcal{L}}}_{\text{Rand-RD}}(\theta)$, we can obtain an unbiased estimation of $\mathcal{L}_{\text{RD}}(\theta)$. More specifically, we can use the following sample-based loss function to substitute Eq.~\eqref{eq:mini-batch_estimation} in implementation:
\begin{align*}
	\widehat{\mathcal{L}}_{\text{RD}}(\theta) &= \quad \underbrace{\frac{1}{M}\sum_{j=1}^M \left[\biggl(R_{\text{ep}}(\tau_j)-\frac{T_j}{|\mathcal{I}_j|}\sum_{t\in\mathcal{I}_j}\widehat R_{\theta}(s_{j,t}, a_{j,t})\biggr)^2\right]}_{\widehat{\mathcal{L}}_{\text{Rand-RD}}(\theta)} \\
	&\quad  - \frac{1}{M}\sum_{j=1}^M\left[\frac{T_j(T_j-K)}{|\mathcal{I}_j|(|\mathcal{I}_j|-1)}\cdot \sum_{t\in\mathcal{I}_j}\biggl(\widehat R_\theta(s_{j,t},a_{j,t})- \frac{1}{|\mathcal{I}_j|}\sum_{t\in\mathcal{I}_j}\widehat R_\theta(s_{j,t},a_{j,t})\biggr)^2\right].
\end{align*}

The above loss function can be optimized through the same mini-batch training paradigm as what is presented in Algorithm~\ref{alg:framework}.
\section{Experiments on Atari Benchmark with Episodic Rewards}

%\begin{figure}[h]
%	\centering
%	\begin{subfigure}[c]{0.24\linewidth}
%		\includegraphics[width=0.9\linewidth]{figures/Assault.png}
%		\caption{Assault}
%	\end{subfigure}
%	\begin{subfigure}[c]{0.24\linewidth}
%		\includegraphics[width=0.9\linewidth]{figures/MsPacman.png}
%		\caption{MsPacman}
%	\end{subfigure}
%	\begin{subfigure}[c]{0.24\linewidth}
%		\includegraphics[width=0.9\linewidth]{figures/Pong.png}
%		\caption{Pong}
%	\end{subfigure}
%	\begin{subfigure}[c]{0.24\linewidth}
%		\includegraphics[width=0.9\linewidth]{figures/RoadRunner.png}
%		\caption{RoadRunner}
%	\end{subfigure}
%	\caption{Screenshots of four Atari games. The task specification of these video games are different from each other.}
%	\label{fig:atari-games}
%\end{figure}

\begin{figure}[h]
	\centering
	\includegraphics[width=0.98\linewidth]{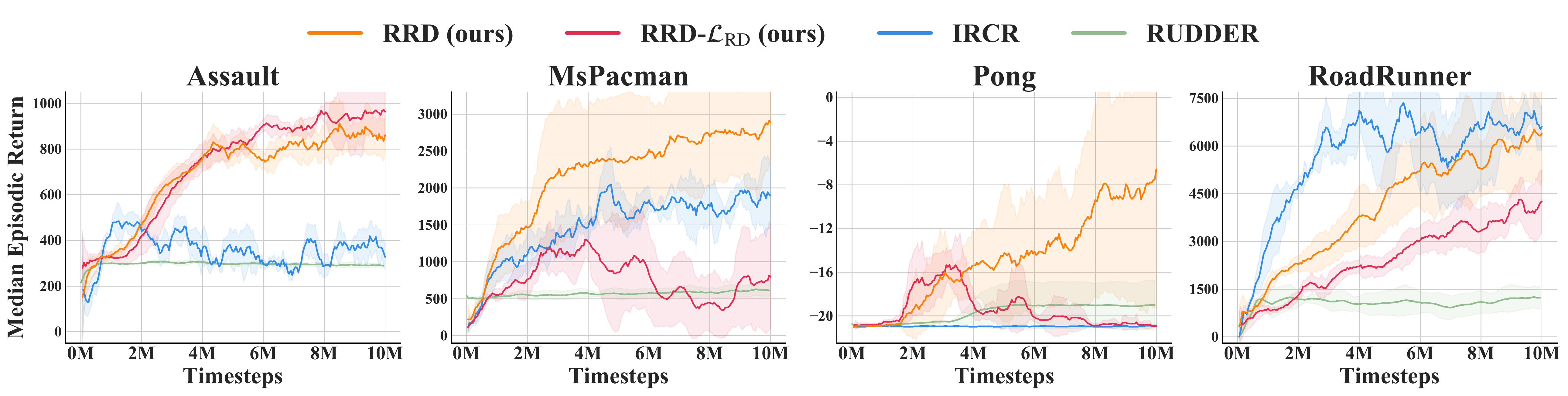}
	\caption{Learning curves on a suite of Atari benchmark tasks with episodic rewards. These curves are plotted from 5 runs with random initializations. The shaded region indicates the standard deviation. To make the comparison more clear, the curves are smoothed by averaging 10 most recent evaluation points. We set up an evaluation point every $5\cdot 10^4$ timesteps.}
	\label{fig:atari-experiments}
\end{figure}

Note that our proposed method does not restricts its usage to continuous control problems. It can also be integrated in DQN-based algorithms to solve problems with discrete-action space. We evaluate the performance of our method built upon DQN in several famous Atari games. The reward redistribution problem in these tasks is more challenging than that in MuJoCo locomotion benchmark since the task horizon of Atari is much longer. For example, the maximum task horizon in Pong can exceed 20000 steps in a single trajectory. This setting highlights the scalability advantage of our method, i.e., the objective of RRD can be optimized by sampling short subsequences whose computation cost is manageable. The experiment results are presented in Figure~\ref{fig:atari-experiments}. Our method outperforms all baselines in 3 out of 4 tasks. We note that IRCR outperforms RRD in RoadRunner. It may be because IRCR is non-parametric and thus does not suffer from the difficulty of processing visual observations.
\section{Visualizing the Proxy Rewards of RRD}

\begin{figure}[h]
	\centering
	\includegraphics[width=0.98\linewidth]{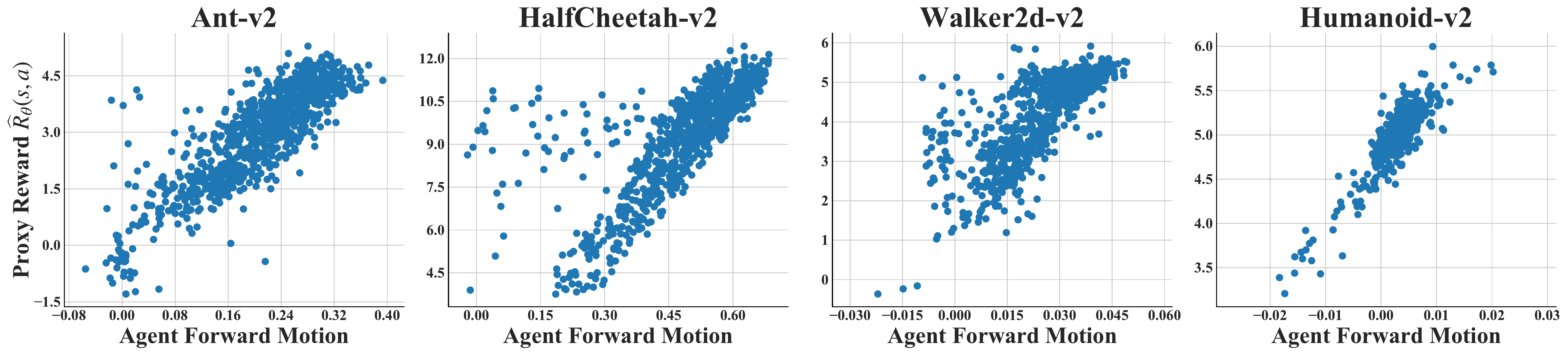}
	\caption{Visualization of the proxy rewards learned by RRD in MuJoCo locomotion tasks.}
	\label{fig:mujoco-visualization}
\end{figure}

In MuJoCo locomotion tasks, the goal of agents is running towards a fixed direction. In Figure~\ref{fig:mujoco-visualization}, we visualize the correlation between per-step forward distance and the assigned proxy reward. We uniformly collected $10^3$ samples during the first $10^6$ training steps. ``Agent Forward Motion'' denotes the forward distance at a single step. ``Proxy Reward $\widehat{R}_\theta(s,a)$'' denotes the immediate proxy reward assigned at that step. It shows that the learned proxy reward has high correlation to the forward distance at that step. 
\section{An Ablation Study on the Hyper-Parameters of IRCR}

We note that \citet{gangwani2020learning} uses a different hyper-parameter configuration from the standard SAC implementation \citep{haarnoja2018soft2}. The differences exist in two hyper-parameters:
\begin{table}[h]
	\centering
	\begin{tabular}{c c}
		\toprule
		Hyper-Parameter & Default Configuration \\
		\midrule
		Polyak-averaging coefficient & 0.001 \\
		\texttt{\#} transitions in replay buffer & $3\cdot 10^5$ \\
		\texttt{\#} transitions in mini-batch for training SAC & 512 \\
		\bottomrule
	\end{tabular}
	\caption{The hyper-parameters used by \citet{gangwani2020learning} in MuJoCo experiments.}
	\label{table:hyper-parameter-ircr}
\end{table}

To establish a rigorous comparison, we evaluate the performance of IRCR with the hyper-parameter configuration proposed by \citet{haarnoja2018soft2}, so that IRCR and RRD use the same hyper-parameters in their back-end SAC agents. The experiment results are presented in Figure~\ref{fig:ircr-ablation}.

\begin{figure}[h]
	\centering
	\includegraphics[width=0.98\linewidth]{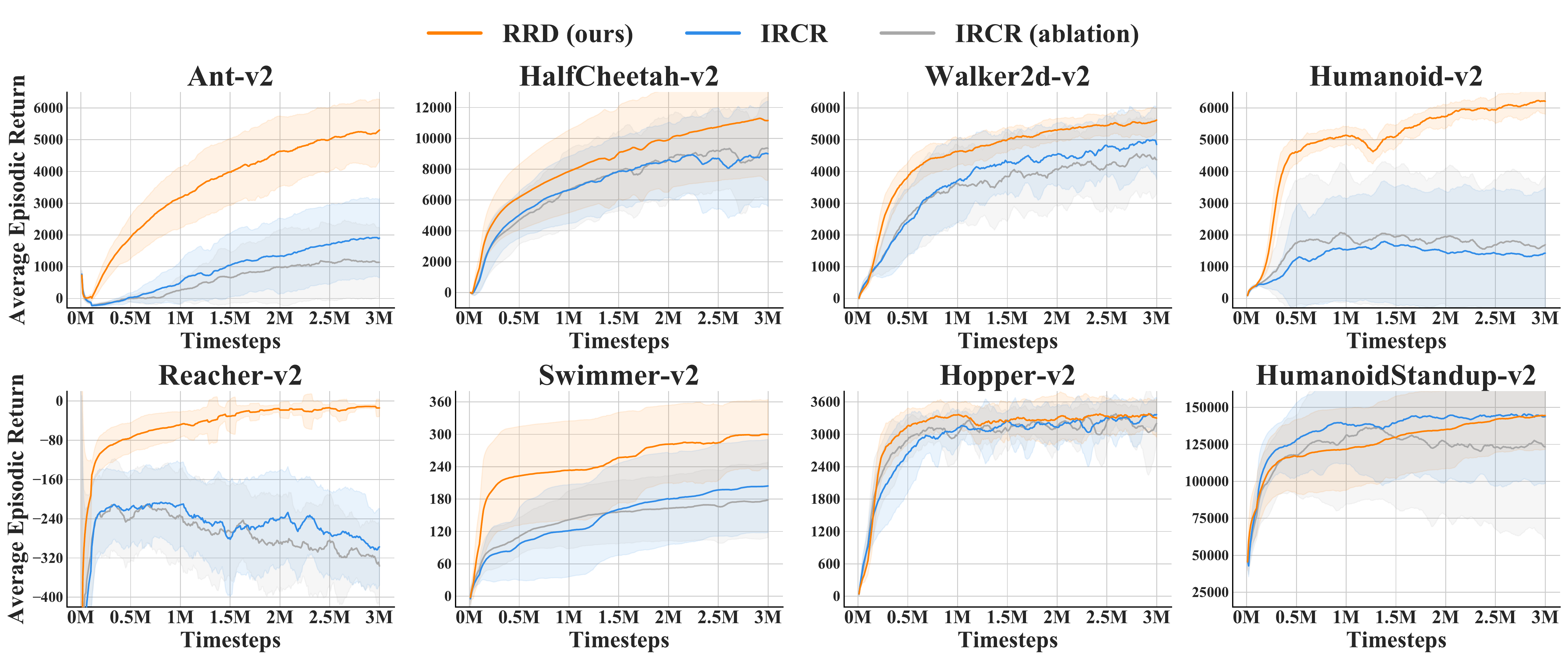}
	\caption{An ablation study on the hyper-parameter configuration of IRCR. The curves of ``IRCR'' refer to the performance of IRCR using the hyper-parameter setting proposed by \citet{gangwani2020learning}. The curves of ``IRCR (ablation)'' refer to the performance of IRCR using the hyper-parameters stated in Table \ref{table:hyper-parameter-mujoco}. All curves are plotted from 30 runs with random initializations.}
	\label{fig:ircr-ablation}
\end{figure}

As shown in Figure~\ref{fig:ircr-ablation}, the hyper-parameters tuned by \citet{gangwani2020learning} is more stable in most environments. Although using hyper-parameters stated in Table~\ref{table:hyper-parameter-mujoco} can improve the performance in some cases, the overall performance cannot outperform RRD.
\section{Related Work} \label{apx:related_work}

\paragraph{Delayed Feedback.}
Tackling environmental delays is a long-lasting problem in reinforcement learning and control theory \citep{nilsson1998stochastic, walsh2009learning, zhou2018distributed, zhou2019learning, heliou2020gradient, nath2021revisiting, tang2021bandit}. In real-world applications, almost all environmental signals have random delays \citep{schuitema2010control, hester2013texplore, amuru2014optimal, lei2020deep}, which is a fundamental challenge for the designs of RL algorithms. A classical method to handle delayed signals is stacking recent observations within a small sliding window as the input for decision-making \citep{katsikopoulos2003markov}. This simple transformation can establish a Markovian environment formulation, which is widely used to deal with short-term environmental delays \citep{mnih2015human}. Many recent works focus on establishing sample-efficient off-policy RL algorithm that is adaptive to delayed environmental signals \citep{bouteiller2021reinforcement, han2021off}. In this paper, we consider an extreme delay of reward signals, which is a harder problem setting than short-term random delays.

\paragraph{Reward Design.}
The ability of reinforcement learning agents highly depends on the designs of reward functions. It is widely observed that reward shaping can accelerate learning \citep{mataric1994reward, ng1999, devlin2011empirical, wu2017training, song2019playing}. Many previous works study how to automatically design auxiliary reward functions for efficient reinforcement learning. A famous paradigm, inverse reinforcement learning \citep{ng2000algorithms, fu2018learning}, considers to recover a reward function from expert demonstrations. Several works consider to learn a reward function from expert labels of preference comparisons \citep{wirth2016model, christiano2017deep, lee2021pebble}, which is a form of weak supervision. Another branch of work is learning an intrinsic reward function from experience that guides the agent to maximize extrinsic objective \citep{sorg2010reward, guo2016deep}. Such an intrinsic reward function can be learned through meta-gradients \citep{zheng2018learning, zheng2020can} or self-imitation \citep{guo2018generative, gangwani2019learning}. A recent work \citep{abel2021expressivity} studies the expressivity of Markov rewards and proposes algorithms to design Markov rewards for three notions of abstract tasks.

\paragraph{Temporal Credit Assignment.}
Another methodology for tackling long-horizon sequential decision problems is assigning credits to emphasize the contribution of each single step over the temporal structure. These methods directly consider the specification of the step values instead of manipulating the reward function. The simplest example is studying how the choice of discount factor $\gamma$ affects the policy learning \citep{petrik2008biasing, jiang2015dependence, fedus2019hyperbolic}. Several previous works consider to extend the $\lambda$-return mechanism \citep{sutton1988learning} to a more generalized credit assignment framework, such as adaptive $\lambda$ \citep{xu2018meta} and pairwise weights \citep{zheng2021pairwise}. RUDDER \citep{arjona2019rudder} proposes a return-equivalent formulation for the credit assignment problem and establish theoretical analyses \citep{holzleitner2021convergence}. Aligned-RUDDER \citep{patil2020align} considers to use expert demonstrations for higher sample efficiency. \citet{harutyunyan2019hindsight} opens up a new family of algorithms, called hindsight credit assignment, that attributes the credits from a backward view.

\paragraph{Value Decomposition.}
This paper follows the paradigm of reward redistribution that aims to decompose the return value to step-wise reward signals. The simplest mechanism in the literature is the uniform reward redistribution considered by \citet{gangwani2020learning}. It can be effectively integrated with off-policy reinforcement learning and thus achieves state-of-the-art performance in practice. Least-squares-based reward redistribution is investigated by \citet{efroni2021reinforcement} from a theoretical point of view. \citet{chatterji2021theory} extends the theoretic results to the logistic reward model. In game theory and multi-agent reinforcement learning, a related problem is how to attribute a global team reward to individual rewards \citep{nguyen2018credit, du2019liir, wang2020shapley}, which provide agents incentives to optimize the global social welfare \citep{vickrey1961counterspeculation, clarke1971multipart, groves1973incentives, myerson1981optimal}. A promising paradigm for multi-agent credit assignment is using structural value representation \citep{sunehag2018value, rashid2018qmix, son2019qtran, bohmer2020deep, wang2021towards, wang2021qplex}, which supports end-to-end temporal difference learning. This paradigm transforms the value decomposition to the structured prediction problem. A future work is integrating prior knowledge of the decomposition structure as many previous works for structured prediction \citep{chen2020rna, tavakoli2021learning}.

\end{document}